\documentclass[11pt]{article}
\usepackage{fullpage}
\usepackage{times} 
\usepackage{helvet} 
\usepackage{courier} 
\usepackage[hyphens]{url} 
\usepackage{graphicx} 
\usepackage{graphicx} 
\usepackage[utf8x]{inputenc}
\usepackage{caption}
\usepackage[dvipsnames]{xcolor}
\usepackage{hyperref}
\hypersetup{
	colorlinks=true,
	linkcolor=magenta,     
	citecolor= ForestGreen
}

\usepackage{adjustbox}
\usepackage{times}
\usepackage{amsmath,amsthm,amssymb}
\newtheorem{lem}{Lemma}
\newtheorem{thm}{Theorem}
\theoremstyle{remark}

\theoremstyle{remark}
\usepackage{enumerate}
\usepackage{graphicx}
\usepackage{subfigure}
\usepackage{booktabs}
\usepackage{subfiles}
\usepackage{algorithm}
\usepackage{algorithmic}
\usepackage{xspace}
\makeatletter
\DeclareRobustCommand\onedot{\futurelet\@let@token\@onedot}
\def\@onedot{\ifx\@let@token.\else.\null\fi\xspace}

\def\ie{\emph{i.e}\onedot} 
 
\def\etc{\emph{etc}\onedot} 
\def\wrt{w.r.t\onedot} 

\def  \vv {\boldsymbol{v}}

\newcommand{\R}{\mathbb{R}}

\def \real    { \mathbb{R} }

\def \Real {\mathbb{R}}

\newcommand{\e}{\begin{equation}}
\newcommand{\ee}{\end{equation}}
\newcommand{\en}{\begin{equation*}}
\newcommand{\een}{\end{equation*}}
\newcommand{\eqn}{\begin{eqnarray}}
\newcommand{\eeqn}{\end{eqnarray}}
\newcommand{\bmat}{\begin{bmatrix}}
	\newcommand{\emat}{\end{bmatrix}}




\newcommand{\vct}[1]{\boldsymbol{#1}}
\newcommand{\mtx}[1]{\boldsymbol{#1}}






\def \lg        {\langle}
\def \rg        {\rangle}

%

\newcommand{\set}[1]{\mathbb{#1}}




\DeclareMathOperator*{\argmin}{\text{arg~min}}
\DeclareMathOperator*{\argmax}{\text{arg~max}}



\newcommand{\vq}{\vct{q}}

\newcommand{\vu}{\vct{u}}

\newcommand{\vx}{\vct{x}}

\newcommand{\mB}{\mtx{B}}

\newcommand{\mJ}{\mtx{J}}

\newcommand{\mM}{\mtx{M}}

\newcommand{\mU}{\mtx{U}}
\newcommand{\mV}{\mtx{V}}
\newcommand{\mW}{\mtx{W}}
\newcommand{\mX}{\mtx{X}}
\newcommand{\mY}{\mtx{Y}}
\newcommand{\mZ}{\mtx{Z}}

\newcommand{\mSigma}{\mtx{\Sigma}}

\newcommand{\mId}{{\bf I}}

\newcommand{\setU}{\set{U}}
\newcommand{\setV}{\set{V}}

\setcounter{MaxMatrixCols}{20}

\graphicspath{{./figs/}}
\def \lg        {\langle}
\def \rg        {\rangle}
\newlength{\imgwidth}
\setlength{\imgwidth}{3.125in}

\newboolean{twoColVersion}
\setboolean{twoColVersion}{false}
\newcommand{\twoCol}[2]{\ifthenelse{\boolean{twoColVersion}} {#1} {#2} }

\newcommand{\optr}[1]{\operatorname{\textbf{tr}}(#1)}

\pdfinfo{
}
\begin{document}
\title{A Unified Framework for Spherical Matrix Factorization}
\author{Kai Liu~\thanks{Computer Science Division, Clemson University. \texttt{kail@clemosn.edu}}
}
\date{}
\maketitle

\begin{abstract}
	Matrix Factorization plays an important role in machine learning such as Non-negative Matrix Factorization, Principal Component Analysis, Dictionary Learning, \etc However, most of the studies aim to minimize the loss by measuring the Euclidean distance, though in some fields, angle distance is known to be more important and critical for analysis. In this paper, we propose a method by adding constraints on factors to unify the Euclidean and angle distance. However, due to non-convexity of the objective and constraints, the optimized solution is not easy to obtain. In this paper we propose a general framework to systematically solve it with provable convergence guarantee with various constraints. 
\end{abstract}

\section{Introduction}
Principal Component Analysis (PCA) is widely known to be one of the most popular methods for dimension reduction. Mathematically, assume that we have a set of $n$ sample images $\mX = [\vx_1, \vx_2, \cdots, \vx_n]\in \Real^{m\times n}$, where $\vx_i\in \Real^m$ and $i \in[1,n]$ denotes the $i$-th data (WLOG, we assume the data is centralized). The objective of PCA is to minimize the  error between original data and reconstruction under new base $\mW \in \textbf{R}^{m\times r}$ ($r$ denotes the reduced dimension):
\begin{equation}
	\label{eq:reconstruction}
	\min_{\mW} h=\|\mX - \mW\mV\|_F^2, \quad  s.t.\quad \mW^T\mW = \mId,
\end{equation}
where $\|\cdot\|_F$ denotes the Frobenius norm of a matrix. Apparently, if there is no constraint on $\mV$, then by taking the derivative of $h$ \wrt $\mV$ and set to be 0, then $\mV^*=\mW^T\mX$. $\mW$ contains the principal directions and $\mV$ indicates the principal components (data projects
along the principal directions). However, when it is to optimize sparse components, namely $\mV$ is imposed with sparsity constraint, $\mW^T\mX$ is no longer the optimal solution~\cite{liu2019spherical}.

Another example is Nonnegative Matrix Factorization (NMF), which aims to find two nonnegative matrices, $\mU \in \real^{m\times r}_+$ and $\mV \in \real^{r\times n}_+$, whose product can best  approximate an input nonnegative data matrix $\mX \in \real_+^{m \times n}$, \ie, $\mX \approx \mU\mV$. Usually, one can interpret the columns of $\mX$ as data points and the rows of $\mX$ as observations (features). A broadly used objective to learn NMF is to minimize the following objective:
\begin{equation}
	\label{eq:nmf_obj}
	\min_{\mU,\mV\geq 0}\ h(\mU,\mV)=\frac{1}{2} ||\mX-\mU\mV||^2_F.
\end{equation}
 If we consider the $r$ columns of $\mU$ the basis vectors, every column of $\mV$ approximates the corresponding data point in $\mX$ by a linear combination of these $r$ bases vectors, where the elements of the column of $\mV$ specify the coefficients to compute the linear combination. NMF has been found useful in a large variety of real-world applications such as image feature extraction~\cite{lee1999learning,liu2019study}, document clustering~\cite{xu2003document,liu2015robust,liu2018high}, single speech separation~~\cite{schmidt2006single}, music transcription~\cite{smaragdis2003non}, bioinformatics~\cite{lee2001application,rossini1994non,potluru2008group,liu2018multiple}, recommendation system~\cite{yu2016dynamic}, astronomy~\cite{blanton2007k,berne2007analysis}, to name a few.

To solve the NMF objective in Eq.~\eqref{eq:nmf_obj}, a Multiplicative Updating Algorithm (MUA) was derived using the following updating rules \cite{lee2001algorithms}:
\begin{equation*}
	\mV_{ij} \leftarrow \mV_{ij}\frac{(\mU^T\mX)_{ij}}{(\mU^T\mU\mV)_{ij}}, \quad \mU_{ij} \leftarrow \mU_{ij}\frac{(\mX\mV^T)_{ij}}{(\mU\mV\mV^T)_{ij}}.
\end{equation*}
The convergence of this algorithm was proved using the auxiliary function method \cite{lee2001algorithms}, whose correctness was also analysed in \cite{ding2010convex}.

Following the updating algorithm listed above, many NMF-based learning methods have been proposed. For instance, to guarantee the uniqueness of $U$ and $V$, orthogonal constraints were used in ~\cite{ding2006orthogonal}:
\begin{equation}
	\label{eq:nmf_orthoobj}
	\min_{\mU, \mV\geq 0}\ h(\mU,\mV)=\frac{1}{2} ||\mX-\mU\mV||^2_F, \quad s.t. \ \ \mU^T\mU=\mId.
\end{equation}
To include the mixed signs of the data matrix, Semi-NMF objective was studied in \cite{ding2010convex}:
\begin{equation}
	\label{eq:nmf_semiobj}
	\min_{\mV\geq 0}\ h(\mU,\mV)=\frac{1}{2} ||\mX-\mU\mV||^2_F.
\end{equation}

Inspired by the examples illustrated above and more (such as Dictionary Learning, \etc), it is noticeable that matrix factorization plays an important role in classical machine learning. Different constraints will lead to various names, but the nature is similar: to approximate the data with linear combinations of learned basis (can be orthogonal such as in PCA and orthogonal dictionary learning).

\begin{figure*}
	\centering
	\includegraphics[width=.9\linewidth]{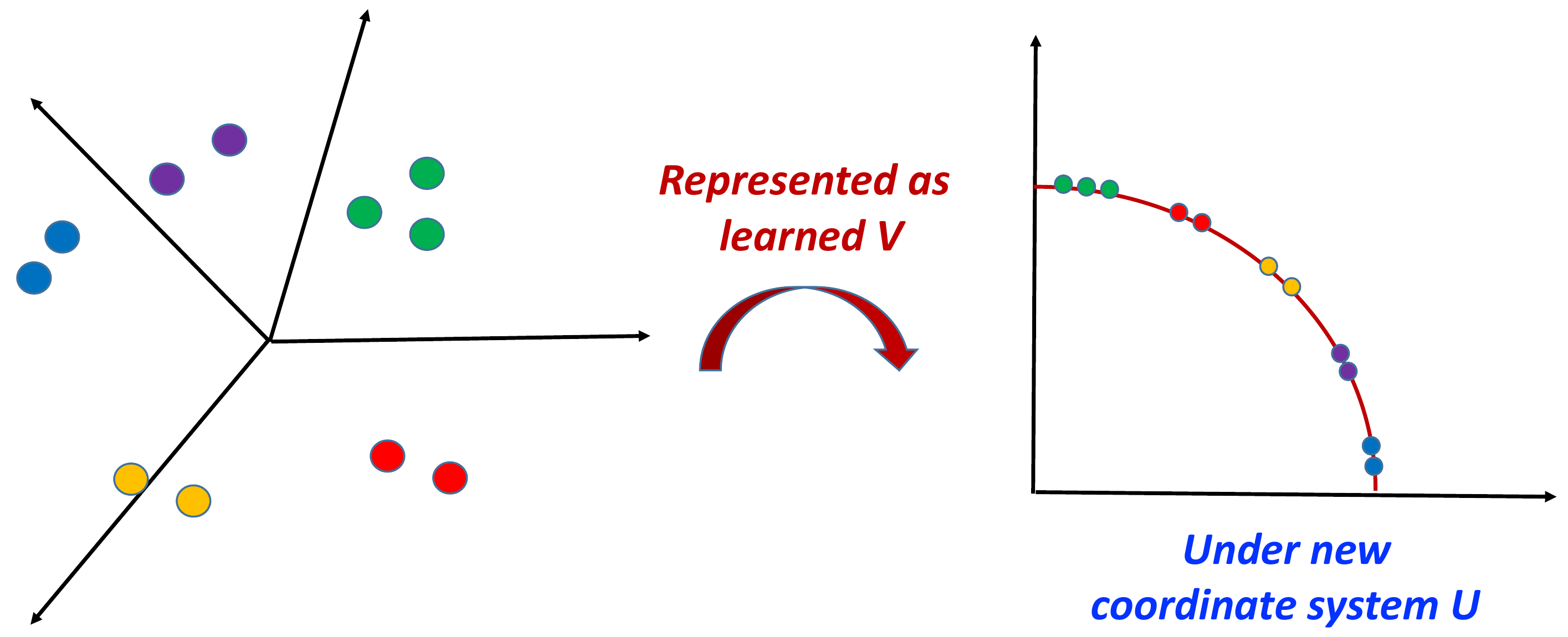}
	\caption{Data in original high dimension ($m=4$) can be analysed (such as clustered or classified) in lower dimension ($r=2$) space, where all data points lie in the sphere under new coordinate system. Since we have nonnegative constraint on $V$, they all lie in the first quadrant.}
	\label{fig:theta1}
\end{figure*}
\begin{figure}
	\centering
	\includegraphics[width=0.4\linewidth]{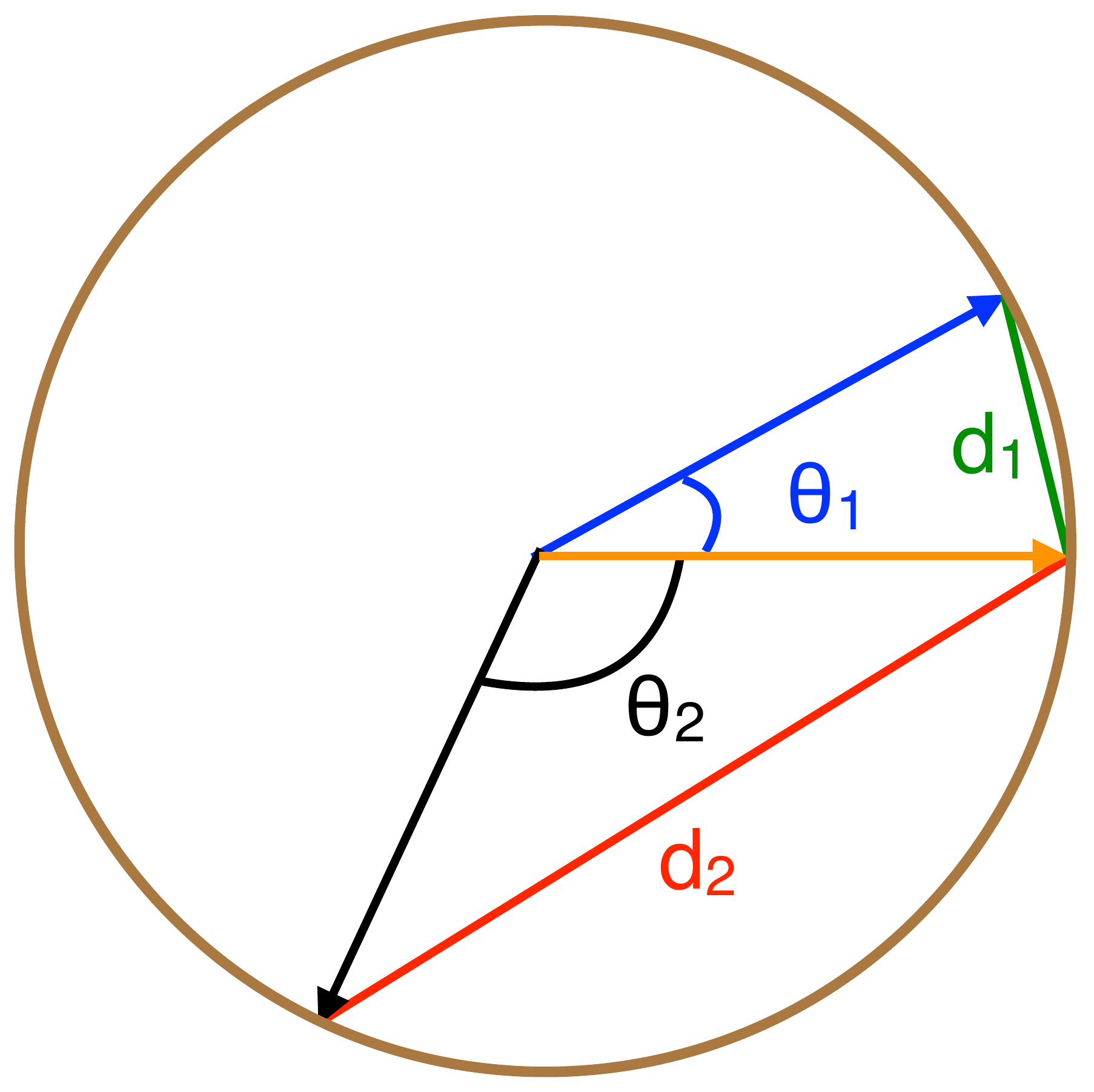}
	\caption{Larger angle distance ($\theta_2>\theta_1$) in the sphere will have larger Euclidean distance ($d_2>d_1$), and vice versa, which unifies the cosine similarity and Euclidean distance.}
	\label{fig:theta}
\end{figure}

\section{Motivation and Our Contributions}
In Eq.~(\ref{eq:reconstruction}--\ref{eq:nmf_semiobj}), the objectives minimize  difference between original data $\mX$ and approximated reconstruction data $\mU\mV$, which is measured by squared Euclidean distance and treat each feature equally important. However, in real world applications, there exist  datasets where distance-based measurement method may yield poor results~\cite{tunali2016improved}. In contrast, similarity-based measurements, such as angle distance, have been found wide successful applications in information retrieval~\cite{singhal2001modern,dhillon2001concept}, signal processing~\cite{hou1987fast}, metric learning~\cite{nguyen2010cosine}, and so on. Although one can calculate the similarity (cosine distance) from the vectors in $\mV$, after all it is an indirect method which is usually not effective. Therefore, deriving a direct method that can straightforwardly measure angle distance from Matrix Factorization (MF) is important and urgent.


With the above motivation, in this paper we propose a general framework for \textbf{Spherical Matrix Factorization} model which unifies Euclidean and angle distance. As illustrated in Fig.~\ref{fig:theta1}, we assume that each data point in $\mX$ that reside in the original high-dimensional space ($\real^m$) can be represented as $\mV$ in a lower dimensional space  ($\real^r$) under new coordinate system $\mU$. By noticing that larger angle in the sphere in Fig.~\ref{fig:theta} also has larger Euclidean distance, we can add the normalization constraint to the component matrix $V$ to guarantee the spherical distribution of components:
\begin{align}
	\min_{\mU\in\R^{m\times r},\mV\in\R^{r\times n}} h &= \|\mX-\mU\mV\|^2_F 
	= \sum_{i,j}[\mX_{ij}-(\mU\mV)_{ij}]^2 \nonumber\\
	s.t.&~~~~ \mU \in \setU, \mV\in \setV,\label{eq:nmf_obj_pca3}
\end{align}
where $\setU=\setU_1 \cup \setU_2, \setV=\setV_1 \cup \setV_2\cup \setV_3\cup \setV_4$ and
\e
\begin{aligned}
&\setU_1:=\{\mU \ | \  \mU^T\mU = \mId\}, \setU_2:=\{\mU \ | \  \mU\ge 0\};\\
&\setV_1:=\{\mV \ | \ \|\mV(:,j)\|=l, \ \forall j\}, \setV_2:=\{\mV \ | \ \mV \ge 0,  \|\mV(:,j)\|=l, \ \forall j\},\\
&\setV_3:=\{\mV \ | \ \|\mV(:,j)\|=l, \|\mV(:,j)\|_0\le s, \ \forall j\}, \setV_4:=\{\mV \ | \ \mV \ge 0,  \|\mV(:,j)\|=l, \|\mV(:,j)\|_0\le s, \ \forall j\}.
\label{eq:setFG 3}
\end{aligned}
\ee
One can see that when $\mU\in \setU_2, \mV\in\setV_2$, it is Spherical Non-negative Matrix Factorization;  when $\mU\in \setU_1, \mV\in\setV_1$, it is Spherical PCA, and so on. Suppose the component is spherically distributed (with $l$ representing sphere radius), then the Euclidean distance between $\vv_i$ and $\vv_j$ is:
\begin{align}
	||\vv_i-\vv_j||^2_2 = &||\vv_i||^2+||\vv_j||^2-2\langle
	\vv_i, \vv_j\rangle
	\label{eq:nmf_obj_pca4}
	\\
	=&  ||\vv_i||^2+||\vv_j||^2-2\frac{\langle
		\vv_i, \vv_j\rangle
	}{||\vv_i||||\vv_j||} \\=& 2l^2-2 \ cos(\theta), \theta \in [0,\pi],\nonumber
\end{align}
which illustrates that larger angle $\theta$ will yield larger Euclidean distance, and vice versa.


In the rest of this paper, we will propose a general but efficient framework to solve Eq.~\eqref{eq:nmf_obj_pca3} where constraints are varied in Eq. (\ref{eq:setFG 3}). Here we summarize our contributions explicitly as follows:
\begin{itemize}
	\item We propose a matrix factorization model that unifies Euclidean and angle distance, which can work well on various datasets.
	\item We systematically propose a wild framework which will make the objective monotonically decreasing while strictly satisfying constraints.
	\item Our algorithm not only can guarantee the monotonically decreasing property of the objective, but also can guarantee the global-sequence convergence property of $\mU$ and $\mV$ with at least sub-linear convergence rate.
\end{itemize}

\section{Formulation And Algorithm}
We first denote:
\e
\label{eq:h}
h(\mU,\mV) = \|\mX-\mU\mV\|_F^2, \quad s.t. \quad \mU \in \setU, \mV\in \setV.
\ee
By noting that Eq.~(\ref{eq:nmf_obj_pca3}) is non-convex, for which no closed solution exists, a natural idea is to use  alternating minimization method to solve the optimization problem:
\e
\begin{aligned}
	\label{eq:alm}
	\mU_{k+1} &= \argmin \|\mX-\mU\mV_k\|_F^2, \quad \mU\in\setU,\\
	\vv_{k+1} &= \argmin \|\vx-\mU_{k+1}\vv\|_F^2, \quad \mV\in\setV,
\end{aligned}
\ee
where $\vv$ denotes the column of $\mV$, as the optimization problem in Eq.~(\ref{eq:nmf_obj_pca3}) with respect to $\mV$ can be decoupled into column-wise sub-problems.

For the past decade, proximal algorithm has been successfully applied to solve a wide variety of problems, such as convex optimization, non-monotone operators ~\cite{combettes2004proximal,kaplan1998proximal} with various applications to non-convex programming. More recently, proximal alternating linearized minimization (PALM) was introduced \cite{bolte2014proximal} as a linearized approximation of the proximal algorithm. 
Considering the fact that objective function in Eq.~(\ref{eq:nmf_obj_pca3}) is non-convex \wrt $\mU$ and $\mV$, plus the constraints on $\mU, \mV$ is non-convex, we utilize PALM and optimize the solution as:

\e
\label{eq:palm1}
\mU_{k+1} = \argmin_{\mU\in\setU} \langle \mU-\mU_k, \nabla h_{\mU}(\mU_k, \mV_k) \rangle +\frac{\mu}{2}||\mU-\mU_k||^2_F,
\ee
\e
\label{eq:palm}
\vv_{k+1} = \argmin_{\mV\in\setV} \langle \vv-\vv_k, \nabla h_{\vv}(\mU_{k+1}, \vv_k) \rangle+\frac{\lambda}{2}||\vv-\vv_k||^2,
\ee
where $\lambda, \mu$ are tuning parameters. And as we will see later, as long as they satisfy certain conditions, the updating method above will guarantee the objective is monotonically decreasing.
\subsection{Proposed Algorithm}
To solve  Eq.~(\ref{eq:palm1}--\ref{eq:palm}), 
we first derive the solution for $\mU$ by fixing $\mV$ and dividing into different cases:
\subsubsection{Optimizing $\setU=\setU_1$}
To derive the solution we first introduce the following lemma~\cite{higham1995matrix}:
\begin{lem}
	$\max \limits_{\mX^T\mX=\mId} \optr{\mX^T\mB}$ is given by $\mX=\mU\mV^T$, where $[\mU,\mSigma,\mV] =svd(\mB)$.
\end{lem}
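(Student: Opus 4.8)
The plan is to recognize this as the classical orthogonal Procrustes / von Neumann trace bound and prove it by plugging in the SVD and using cyclicity of the trace. First I would write the (thin) SVD $\mB=\mU\mSigma\mV^T$, so that $\mU$ has orthonormal columns, $\mV$ is orthogonal, and $\mSigma=\diag(\sigma_1,\dots,\sigma_r)$ with $\sigma_i\ge 0$. For an arbitrary feasible $\mX$ (that is, $\mX^T\mX=\mId$) I would set $\mZ:=\mV^T\mX^T\mU$ and use the cyclic property of the trace to rewrite $\optr{\mX^T\mB}=\optr{\mX^T\mU\mSigma\mV^T}=\optr{\mV^T\mX^T\mU\mSigma}=\optr{\mZ\mSigma}=\sum_i \sigma_i\mZ_{ii}$, since $\mSigma$ is diagonal.

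The key step is then to bound each diagonal entry of $\mZ$: I claim $|\mZ_{ii}|\le 1$ for all $i$. This holds because $\mZ$ is a product of three matrices each of spectral norm at most $1$ — $\mV$ is orthogonal, and $\mU$ and $\mX$ have orthonormal columns — so $\|\mZ\|_2\le \|\mV^T\|_2\,\|\mX^T\|_2\,\|\mU\|_2\le 1$, whence $|\mZ_{ii}|=|\ve_i^T\mZ\ve_i|\le\|\mZ\|_2\le 1$. Combining this with $\sigma_i\ge 0$ yields $\optr{\mX^T\mB}=\sum_i\sigma_i\mZ_{ii}\le\sum_i\sigma_i$, a bound that holds for every feasible $\mX$.

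Finally I would verify that $\mX=\mU\mV^T$ is feasible and attains the bound. Feasibility is immediate: $(\mU\mV^T)^T(\mU\mV^T)=\mV\mU^T\mU\mV^T=\mV\mV^T=\mId$. For this choice, $\mZ=\mV^T(\mU\mV^T)^T\mU=\mV^T\mV\mU^T\mU=\mId$, so $\optr{\mX^T\mB}=\trace(\mSigma)=\sum_i\sigma_i$, matching the upper bound; hence $\mX=\mU\mV^T$ is a maximizer and the optimal value is $\sum_i\sigma_i$. (When $\mB$ is rank-deficient the maximizer need not be unique, but the stated $\mX=\mU\mV^T$ always works.)

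I expect the only real obstacle to be bookkeeping with the matrix dimensions: in the thin SVD $\mU$ is generally not square, so one must use the columns-orthonormal fact $\|\mU\|_2=1$ (and likewise $\|\mX\|_2=1$) rather than $\mU\mU^T=\mId$, and check that all products and the trace are well defined. Once that is in place the argument is routine; alternatively one could invoke the full SVD together with von Neumann's trace inequality, but the elementary entrywise bound above keeps the proof self-contained.
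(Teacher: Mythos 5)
Your proof is correct and complete: the reduction of $\optr{\mX^T\mB}$ to $\trace(\mZ\mSigma)$ with $\mZ=\mV^T\mX^T\mU$ and $\|\mZ\|_2\le 1$, the resulting bound $\sum_i\sigma_i$, and the verification that $\mX=\mU\mV^T$ is feasible and attains the bound are all sound, including the dimension bookkeeping for the thin SVD. Note that the paper itself gives no proof of this lemma --- it is imported by citation from Higham's work on matrix nearness problems --- and your argument is the standard one (von Neumann's trace bound specialized to the orthogonal Procrustes setting), so there is nothing to reconcile.
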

Now we can derive the solution to Eq.~(\ref{eq:palm1}) as follows:
\begin{align}
	\mU_{k+1}&=\argmin_{\mU^T\mU=I} \langle \mU-\mU_k, \nabla h(\mU_k) \rangle+\frac{\mu}{2}||\mU-\mU_k||^2_F\nonumber\\
	&=\argmax_{\mU^T\mU=I} tr(\mU^T\mM) = \mY\mZ^T,
	\label{eq:set1UConstraint}
\end{align}
where $\mM=2(\mX-\mU_k\mV_k)V_k^T+\mu \mU_k$ and $\mY, \mZ$ is obtained from $[\mY,\mSigma,\mZ] =svd(\mM)$.
\subsubsection{Optimizing  $\setU=\setU_2$}
Similarly, if $\mU\ge 0$, then
\begin{align}
	\mU_{k+1}&=\argmin_{\mU\ge0} \langle \mU-\mU_k, \nabla h(\mU_k) \rangle+\frac{\mu}{2}||\mU-\mU_k||^2_F\nonumber\\
	&=\argmin_{\mU\ge0} \|\mU-[\mU_k+\frac{2}{\mu}(\mX-\mU_k\mV_k)\mV_k^T]\|_F^2 \\
	&= \max\{\mU_k+\frac{2}{\mu}(\mX-\mU_k\mV_k)\mV_k^T,0\}.
	\label{eq:set2UConstraint}
\end{align}


\noindent Now we turn to optimize $\vv_{k+1}$ in Eq.~(\ref{eq:palm}) given $\|\vv\|=l$:
\begin{align}
	\vv_{k+1}&= \argmin_{\mV\in\setV}\langle \vv-\vv_k, \nabla h(\vv_k) \rangle+\frac{\lambda}{2}||\vv-\vv_k||^2_F\nonumber\\
	&=\argmax_{\mV\in\setV} \langle \vv,\vq \rangle,
	\label{eq:v}
\end{align}
where $\vq=2\mU_{k+1}^T\vx+(\lambda-2\mU_{k+1}^T\mU_{k+1}) \vv_k$.



\noindent In the following various cases, we first set $l=1$ and then optimize $l$ separately.
\subsubsection{Optimizing $\setV=\setV_1$}
We first consider the case that $\setV=\setV_1$, then apparently $v=\frac{\vq}{\|\vq\|}$.
\subsubsection{Optimizing $\setV=\setV_2$}

In this case, $\vv$ is positive and spherically distributed. We can optimize $\vv$ by dividing into three cases:
\begin{itemize}
	\item \textbf{Case 1}: When $\vq\ge 0$, $J=\|\vv\|\|\vq\|\cos\theta$, where $\theta$ is the angle between $\vv$ and $\vq$, obviously $\theta=0$ will maximize $\mJ$, thus we have $v^*=\frac{\vq}{\|\vq\|}$.
	\item \textbf{Case 2}: When $\vq$ has mixed signs, we can denote $\vq=[\vq_+,\vq_{-}]$, where $\vq_+>0$, while $\vq_{-}\le0$. Assume the optimized $\vv=[\vv_+,\vv_{-}]$ which corresponds to the index of $\vq$ with $\vv_{-}\neq0$ but $\vv_{-}\ge 0$, then we have:
	\begin{equation}
		\begin{aligned}
			\langle \vv,\vq \rangle&=\langle \vv_+,\vq_+ \rangle+\langle \vv_{-},\vq_{-} \rangle\\
			&<\langle \vv_+,\vq_+ \rangle
			<\langle \frac{\vv_+}{\|\vv_+\|}, \vq_+ \rangle,
		\end{aligned}
	\end{equation}
	where by following \textit{Case 1}, we have $\vv^*=[\frac{\vq_+}{\|\vq_+\|};0;\dots;0]$.
	\item \textbf{Case 3}: When  $\vq\le 0$, the objective $\mJ$ is equivalent to:
	$$
	\min \  \mJ = \langle \vv,\bar{\vq} \rangle,
	$$
	where $\bar{\vq}=-\vq$. Since $\mJ=\|\vv\|\|\bar{\vq}\|\cos\theta$, which is to maximize the angle between $\vv$ and $\bar{\vq}$, where both  $\vv$ and $\bar{\vq}$ are nonnegative. Without loss of generality, assume $\bar{\vq}$ is in increasing order, then we have:
	\begin{equation}
		\begin{aligned}
			\langle \vv,\bar{\vq} \rangle&=\sum \vv_i\bar{\vq}_i\\
			&\ge\bar{\vq}_1\sum \vv_i\\
			&\ge\bar{\vq}_1
		\end{aligned}
	\end{equation}
	where the last line follows from $(\sum \vv_i)^2\ge \sum \vv_i^2=\|\vv\|^2=1$ when $\vv\ge0$. The equation holds if and only if $\vv=[1;0\dots;0]$.
\end{itemize}

\subsubsection{$\setV=\setV_3$}
WLOG, we assume $\vq=[q_1;q_2;\dots;q_r]$ is sorted in an order such that $|q_1|\ge|q_2|\ge\dots\ge|q_r|$, and if we denote $\bar{\vq}=[q_1;q_2;\dots;q_s;0;0;\dots;0]$, then $v=\frac{\bar{\vq}}{\|\bar{\vq}\|}$.
\subsubsection{$\setV=\setV_4$}
Apparently, $\setV_4$ can be regarded as a projection of $\setV_2$ to sparsity constraint. Similarly, it would divide into three cases and only choose the largest $s$ elements in $\vv$ before normalization.

\noindent Now we turn to optimize $l$. Taking the derivative \wrt $l$, by setting the gradient to be 0  we obtain:
 \begin{equation}\label{eq:l}
l=\frac{\lg \mX,\mU\mV\rg}{\lg \mU\mV,\mU\mV\rg},
 \end{equation}
 in any of the cases above. And $\mV=l\star\mV$ making all the data points reside in the sphere with $radius=l$.
 
Finally, the algorithm to solve Eq.~\eqref{eq:nmf_obj_pca3} is summarized in Alg. ~\ref{alg:alg}.

\begin{algorithm}
	\caption{Our proposed method to solve Eq.~(\ref{eq:nmf_obj_pca3})}
	\label{alg:alg}
	\begin{algorithmic}
		\STATE {\bfseries Input:} data $\mX\in\R^{m\times n}$, rank of factors $r$, regularization parameters $\lambda,\mu$, number of iterations $K$.
		\STATE {\bfseries Initialization:} $\mU^0\in\setU, \mV^0\in\setV$,
		\FOR{$k \leq K$}
		\STATE $optimize \ \mU^{k+1} \ as \ Eq.~(\ref{eq:palm1}) \ where \  \mu\ge2\sigma_1(\mV^k\mV^{kT})$,
		\STATE $optimize \ each \ \vv^{k+1} \ as \ Eq.~(\ref{eq:palm}) \ where \  \ \lambda \ge2\sigma_1(\mU^{(k+1)T}\mU^{k+1})$,
		\STATE $optimize \ l \ as \ Eq.~(\ref{eq:l})$,
		\ENDFOR
		\STATE {\bfseries Output:} $\mU^K$ and $\mV^K$.
	\end{algorithmic}
\end{algorithm}

\section{Convergence Analysis}

Before we start the convergence analysis, we first give the lemma below which will be critical to  the whole proof~\cite{zhou2018fenchel,liu2021factor}.
\begin{lem}
	For Eq. (\ref{eq:h}) we have:  $\nabla_{\mU}^2 h(\mU,\mV)=2\mV\mV^T, \nabla_{\mV}^2 h(\mU,\mV)=2\mU^T\mU$, and therefore:
	\begin{equation}
		\begin{split}
			&h(\mU^{k+1},\mV^k)-h(\mU^{k},\mV^k)\le \lg \mU^{k+1}-\mU^k, \nabla_{\mU} h(\mU^k,\mV^k)\rg + \frac{L_U^k}{2}\|\mU^{k+1}-\mU^k\|^2_F,\\
			&h(\mU^{k+1},\mV^{k+1})-h(\mU^{k+1},\mV^k)\le \lg \mV^{k+1}-\mV^k, \nabla_{\mV} h(\mU^{k+1},\mV^k)\rg + \frac{L_V^k}{2}\|\mV^{k+1}-\mV^k\|^2_F,
		\end{split}
	\end{equation}
where $L_U^k=\sigma_1(\nabla_{\mU}^2 h(\mU,\mV))=2\sigma_1(\mV^k\mV^{kT}), L_V^k=\sigma_1(\nabla_{\mV}^2 h(\mU,\mV))=2\sigma_1(\mU^{(k+1)T}\mU^{k+1})$ and $\sigma_1(\mX)$ denotes the largest eigenvalue of $\mX$.
\end{lem}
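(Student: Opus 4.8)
The plan is to prove the two inequalities one block at a time, exploiting that $h$ is a \emph{quadratic} function of the block being updated once the other block is frozen. For such a function a second-order Taylor expansion is exact, so the ``descent lemma'' is not an invocation of a general Lipschitz-gradient estimate but simply an identity followed by one elementary eigenvalue bound on the (constant) second-order term.

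First I would record the expansions. Fix $\mV$, set $\Delta=\mU'-\mU$ and $R=\mX-\mU\mV$; then
\[
h(\mU',\mV)=\|R-\Delta\mV\|_F^2=h(\mU,\mV)-2\langle R\mV^T,\Delta\rangle+\|\Delta\mV\|_F^2 ,
\]
which simultaneously identifies $\nabla_\mU h(\mU,\mV)=-2(\mX-\mU\mV)\mV^T=2(\mU\mV-\mX)\mV^T$ and shows the unique second-order term is $\|\Delta\mV\|_F^2$, i.e.\ the Hessian acts as $\Delta\mapsto 2\Delta\mV\mV^T$ (equivalently $2(\mV\mV^T)\otimes\mId$ after vectorizing $\mU$), which is the meaning of ``$\nabla_\mU^2 h=2\mV\mV^T$''. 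The mirror computation, fixing $\mU$ and setting $E=\mV'-\mV$, gives $h(\mU,\mV')=h(\mU,\mV)-2\langle\mU^T R,E\rangle+\|\mU E\|_F^2$, hence $\nabla_\mV h=2\mU^T(\mU\mV-\mX)$ and $\nabla_\mV^2 h=2\mU^T\mU$.

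Next comes the one estimate needed: for any $\Delta$,
\[
\|\Delta\mV\|_F^2=\trace\!\big((\Delta^T\Delta)\,\mV\mV^T\big)\le \sigma_1(\mV\mV^T)\,\trace(\Delta^T\Delta)=\tfrac{L_U^k}{2}\|\Delta\|_F^2 ,
\]
using that $\Delta^T\Delta$ and $\mV\mV^T$ are PSD together with $\trace(AB)\le\sigma_1(B)\trace(A)$ for PSD $A$ (write $\trace(AB)=\trace(A^{1/2}BA^{1/2})$ and use $A^{1/2}BA^{1/2}\preceq\sigma_1(B)A$); identically $\|\mU E\|_F^2\le\sigma_1(\mU^T\mU)\|E\|_F^2=\tfrac{L_V^k}{2}\|E\|_F^2$. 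Plugging $\Delta=\mU^{k+1}-\mU^k$, $\mV=\mV^k$ into the first expansion gives
\[
h(\mU^{k+1},\mV^k)-h(\mU^k,\mV^k)=\langle\mU^{k+1}-\mU^k,\nabla_\mU h(\mU^k,\mV^k)\rangle+\|(\mU^{k+1}-\mU^k)\mV^k\|_F^2 ,
\]
and bounding the last term by $\tfrac{L_U^k}{2}\|\mU^{k+1}-\mU^k\|_F^2$ yields the first claimed inequality; plugging $E=\mV^{k+1}-\mV^k$, $\mU=\mU^{k+1}$ into the second expansion and bounding $\|\mU^{k+1}E\|_F^2$ yields the second. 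Since Eq.~\eqref{eq:palm} updates $\mV$ column by column, one may instead apply the scalar version of the expansion to each column $\vv$ with Hessian $2\mU^T\mU$ and sum over $j$, which reassembles exactly the stated matrix inequality.

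There is essentially no obstacle here: this is the standard block descent lemma for a bilinear least-squares objective, and the proof is a two-line Taylor identity plus the trace/eigenvalue bound above. The only points deserving care are getting the sign and the factor $2$ right in $\nabla_\mU h$ and $\nabla_\mV h$, justifying $\trace(AB)\le\sigma_1(B)\trace(A)$ for PSD matrices as indicated, and being explicit that ``$\nabla^2$'' of a matrix-argument function denotes the bilinear form $\Delta\mapsto 2\Delta\mV\mV^T$, whose largest eigenvalue is $2\sigma_1(\mV\mV^T)$, consistent with the definitions of $L_U^k$ and $L_V^k$.
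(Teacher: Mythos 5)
Your proof is correct. Note, however, that the paper does not actually prove this lemma: it is stated bare, with the descent inequalities attributed to external references on Fenchel-type schemes and PALM, where they are obtained as instances of the generic descent lemma for functions with block-Lipschitz gradients. Your route is self-contained and in fact slightly sharper: because $h(\mU,\mV)=\|\mX-\mU\mV\|_F^2$ is exactly quadratic in each block, the expansion $h(\mU+\Delta,\mV)-h(\mU,\mV)=\lg \Delta,\nabla_{\mU}h\rg+\|\Delta\mV\|_F^2$ is an identity, so the only inequality in your argument is $\|\Delta\mV\|_F^2=\trace\big((\Delta^T\Delta)\mV\mV^T\big)\le\sigma_1(\mV\mV^T)\|\Delta\|_F^2$, which you justify correctly via $\trace(AB)\le\sigma_1(B)\trace(A)$ for PSD $A,B$. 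This gives the stated bounds with the right constants $L_U^k=2\sigma_1(\mV^k\mV^{kT})$ and $L_V^k=2\sigma_1(\mU^{(k+1)T}\mU^{k+1})$, and your remark that ``$\nabla^2_{\mU}h=2\mV\mV^T$'' must be read as the operator $\Delta\mapsto2\Delta\mV\mV^T$ (i.e.\ $2(\mV\mV^T)\otimes\mId$ after vectorization) is a point the paper glosses over. The only thing you lose relative to the citation-based route is generality: the Lipschitz-gradient argument would survive if $h$ were replaced by a non-quadratic smooth loss, whereas your exact expansion would not; for the objective actually used here, your argument is the cleaner one.
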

To analyse  the convergence, we rewrite Eq. \eqref{eq:h} as
\e
\min_{\mU,\mV} f(\mU,\mV)= h(\mU,\mV) + \delta_{\setU}(\mU) + \delta_{\setV}(\mV),
\label{eq:obj no regularizer}\ee
where $\delta_{\setU}(\mU) = \left\{ \begin{matrix} 0, & \mU\in\setU\\ \infty, & \mU\notin \setU\end{matrix}\right.$ is the indicator function of the set $\setU$ and therefore nonsmooth, so is $\delta_{\setV}(\mV)$.

The following result establishes that the subsequence convergence property of the proposed algorithm.
\begin{thm}
	Let $\{\mW_k\}_{k\geq 0} =\{(\mU_k,\mV_k)\}_{k\geq 0}$ be the sequence generated by Algorithm~\ref{alg:alg} with regularization parameter $\lambda^k\ge L_V^{k-1}, \mu^k\ge L_U^{k-1}$, then the objective in Eq. (\ref{eq:h}) is monotonically non-increasing.
	\label{thm:subsequence convergence}\end{thm}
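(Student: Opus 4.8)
The plan is to establish the "sufficient decrease" inequality on each of the two alternating sub-steps separately and then chain them. The backbone is Lemma 2: since $h$ is a quadratic in $\mU$ (with $\mV$ fixed) and in $\mV$ (with $\mU$ fixed), the descent lemma gives the two displayed upper bounds with Lipschitz constants $L_U^k = 2\sigma_1(\mV^k\mV^{kT})$ and $L_V^k = 2\sigma_1(\mU^{(k+1)T}\mU^{k+1})$. First I would handle the $\mU$-step. By construction in Eq.~\eqref{eq:palm1}, $\mU_{k+1}$ minimizes $\langle \mU-\mU_k,\nabla_{\mU} h(\mU_k,\mV_k)\rangle + \tfrac{\mu^k}{2}\|\mU-\mU_k\|_F^2$ over $\mU\in\setU$; in particular, evaluating this (optimal) value against the feasible point $\mU=\mU_k$, whose value is $0$, yields
\begin{equation*}
\langle \mU_{k+1}-\mU_k,\nabla_{\mU} h(\mU_k,\mV_k)\rangle + \frac{\mu^k}{2}\|\mU_{k+1}-\mU_k\|_F^2 \le 0.
\end{equation*}
Combining this with the first inequality of Lemma 2 and the hypothesis $\mu^k\ge L_U^k$ (the algorithm sets $\mu\ge 2\sigma_1(\mV^k\mV^{kT})=L_U^k$; note the theorem's $\mu^k\ge L_U^{k-1}$ should read $\mu^k \ge L_U^k$, which is what Alg.~\ref{alg:alg} enforces) gives
\begin{equation*}
h(\mU_{k+1},\mV_k) - h(\mU_k,\mV_k) \le -\frac{\mu^k - L_U^k}{2}\|\mU_{k+1}-\mU_k\|_F^2 \le 0.
\end{equation*}

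Next I would repeat the identical argument for the $\mV$-step. Since the constraint set $\setV$ and the objective both decouple across columns of $\mV$, the column-wise updates in Eq.~\eqref{eq:palm} collectively solve $\min_{\mV\in\setV}\langle \mV-\mV_k,\nabla_{\mV}h(\mU_{k+1},\mV_k)\rangle + \tfrac{\lambda^k}{2}\|\mV-\mV_k\|_F^2$, so testing against $\mV=\mV_k$ gives the analogous inequality, and combining with the second bound of Lemma 2 and $\lambda^k\ge L_V^k = 2\sigma_1(\mU^{(k+1)T}\mU^{k+1})$ yields
\begin{equation*}
h(\mU_{k+1},\mV_{k+1}) - h(\mU_{k+1},\mV_k) \le -\frac{\lambda^k - L_V^k}{2}\|\mV_{k+1}-\mV_k\|_F^2 \le 0.
\end{equation*}
Adding the two inequalities telescopes to $h(\mU_{k+1},\mV_{k+1}) \le h(\mU_k,\mV_k)$, i.e.\ the objective is monotonically non-increasing along the iterates; since $f$ and $h$ agree on the feasible set (all iterates stay feasible by construction of the projection/SVD updates), the same holds for $f$ in Eq.~\eqref{eq:obj no regularizer}.

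The one genuine subtlety — and the step I expect to be the main obstacle — is the $l$-update in Eq.~\eqref{eq:l}, which happens after the $\mV$-update and rescales all columns to the optimal common radius. I would argue that this extra step only further decreases $h$: holding the \emph{directions} of $\mV$'s columns and all of $\mU$ fixed, $h$ as a function of the scalar $l$ is a one-dimensional convex quadratic, and Eq.~\eqref{eq:l} is its exact global minimizer, so replacing $\mV_{k+1}$ by $l\star\mV_{k+1}$ cannot increase the objective; moreover the rescaled point remains in $\setV$ (the sphere-radius and sparsity-pattern constraints are preserved). Hence the full iteration, including the radius refinement, is still monotone. A secondary point to get right is that each sub-step genuinely attains its minimum — Lemma 1 (Higham) supplies this for the $\setU_1$ case, explicit projections handle $\setU_2$, and the case analyses in Sections on $\setV_1$–$\setV_4$ supply it for the $\mV$-updates — so that "testing the optimal value against the feasible baseline point" is legitimate; I would state this once and reuse it. Finally I would note boundedness below ($h\ge 0$), which together with monotonicity gives convergence of the objective values $\{h(\mW_k)\}$, setting up the subsequence-convergence claims that presumably follow.
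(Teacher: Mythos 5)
Your proof is correct and follows essentially the same route as the paper's: test the minimality of each proximal subproblem against the previous (feasible) iterate, combine with the descent inequalities of Lemma~2, and chain the two substeps, with the indicator functions vanishing along the feasible iterates. The only difference is that you additionally verify that the $l$-rescaling step of Eq.~\eqref{eq:l} cannot increase the objective --- a point the paper's proof silently omits --- and you flag the $L_U^{k-1}$ versus $L_U^{k}$ indexing mismatch between the theorem statement and Algorithm~\ref{alg:alg}, both of which are sound refinements rather than departures.
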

\begin{proof}
	First note that for all $k$, according to Algorithm \ref{alg:alg}, we always have $\delta_{\setU}(\mU_k) = \delta_{\setV}(\mV_k) = 0$ and accordingly $f(\mU^k,\mV^k) = h(\mU^k,\mV^k)$.
%
	By the definition of Lipschitz continuous gradient and Taylor expansion,
	we have
	\e
	h(\mU^{k},\mV^{k-1})-h(\mU^{k-1},\mV^{k-1})\le \lg \mU^{k}-\mU^{k-1}, \nabla_{\mU} h(\mU^{k-1},\mV^{k-1})\rg + \frac{L_U^{k-1}}{2}\|\mU^{k}-\mU^{k-1}\|^2_F.
	\label{eq:Lip consequence}\ee
	Also by the definition of proximal map, we get:
	\e\begin{split}
		\mU_{k} 
		= \argmin_{\mU} \delta_{\setU}(\mU) +  \frac{\mu^k}{2}\|\mU - \mU_{k-1}\|_F^2+\langle \nabla_{\mU} h(\mU_{k-1},\mV_{k-1}),\mU - \mU_{k-1} \rangle.
	\end{split}\label{eq:opt F}\ee
	Therefore, we have $f(\mU_k,\mV_{k-1}) \le f(\mU_{k-1},\mV_{k-1})$, which implies the following:
	\e
		\delta_{\setU}(\mU_k) +  \frac{\mu^k}{2}\|\mU_k - \mU_{k-1}\|_F^2 
		+\langle \nabla_{\mU} h(\mU_{k-1},\mV_{k-1}),\mU_k - \mU_{k-1} \rangle\leq \delta_{\setU}(\mU_{k-1}).
		\label{eq: h lambda t}
	\ee
	Combining Eq.~(\ref{eq:Lip consequence}) to Eq.~(\ref{eq: h lambda t}), we have:
	\e\label{eq:FG_update}
	\begin{aligned}
		&h(\mU_{k},\mV_{k-1}) + \delta_{\setU}(\mU_k)\\
		 \leq& h(\mU_{k-1},\mV_{k-1})+ \langle \nabla_{\mU} h(\mU_{k-1},\mV_{k-1}),\mU_k - \mU_{k-1} \rangle 
		+ \frac{L_U^{k-1}}{2}\|\mU_k - \mU_{k-1}\|_F^2+\delta_{\setU}(\mU_{k})\\
		\leq & h(\mU_{k-1},\mV_{k-1})+\frac{L_U^{k-1}}{2}\|\mU_k - \mU_{k-1}\|_F^2
		+\delta_{\setU}(\mU_{k-1})-\frac{\mu^k}{2}\|\mU_k - \mU_{k-1}\|_F^2\\
		=&h(\mU_{k-1},\mV_{k-1}) + \delta_{\setU}(\mU_{k-1}) - \frac{\mu^k - L_U^{k-1}}{2}\|\mU_{k} - \mU_{k-1}\|_F^2.
	\end{aligned}
	\ee
	Similarly, we have
	\e
		\label{eq:G_update}
		h(\mU_k,\mV_{k}) - h(\mU_{k},\mV_{k-1})+\delta_{\setV}(\mV_k)-\delta_{\setV}(\mV_{k-1})
		\leq -\frac{\lambda^k - L_V^{k-1}}{2}\|\mV_{k} - \mV_{k-1}\|_F^2,
\ee
which illustrates within each update, the objective is non-increasing if $\lambda^k>L_V^{k-1}, \mu^k> L_U^{k-1}$. 
\end{proof}
Global sequence convergence property and at least sub-linear convergence rate can be proved by following~\cite{bolte2014proximal,zhu2018dropping}.
\section{Experiment}
We will carefully examine the effectiveness of our proposed method including spherical nonnegative matrix  factorization (on synthetic data) and PCA (real-world data) respectively.
\begin{figure*}[t!]
	\centering
	\subfigure{
		\begin{minipage}[b]{0.31\textwidth}
			\includegraphics[height=4.5cm,width=1\textwidth]{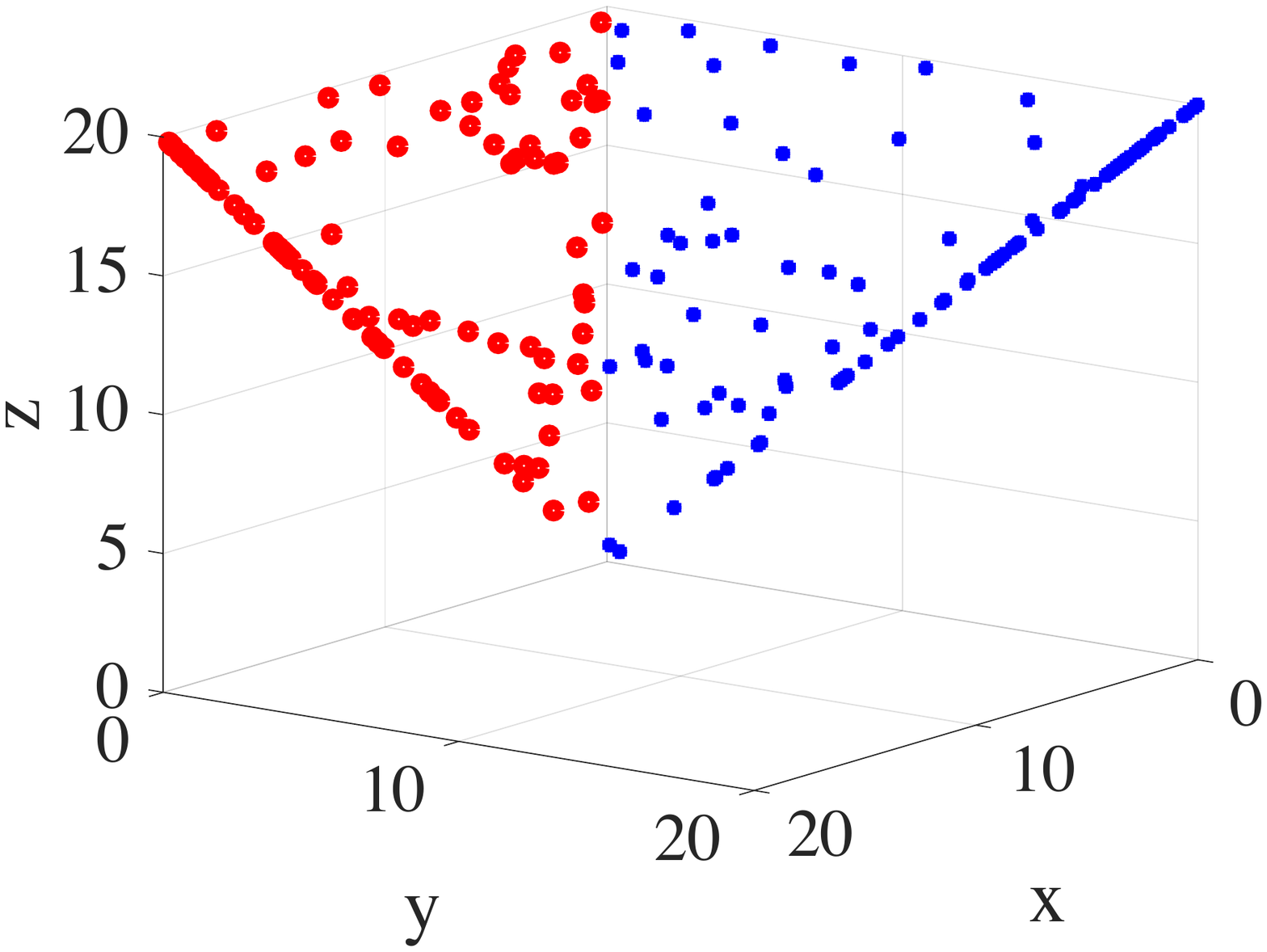}
		\end{minipage}
	}
	\subfigure{
		\begin{minipage}[b]{0.31\textwidth}
			\includegraphics[height=4.5cm,width=1\textwidth]{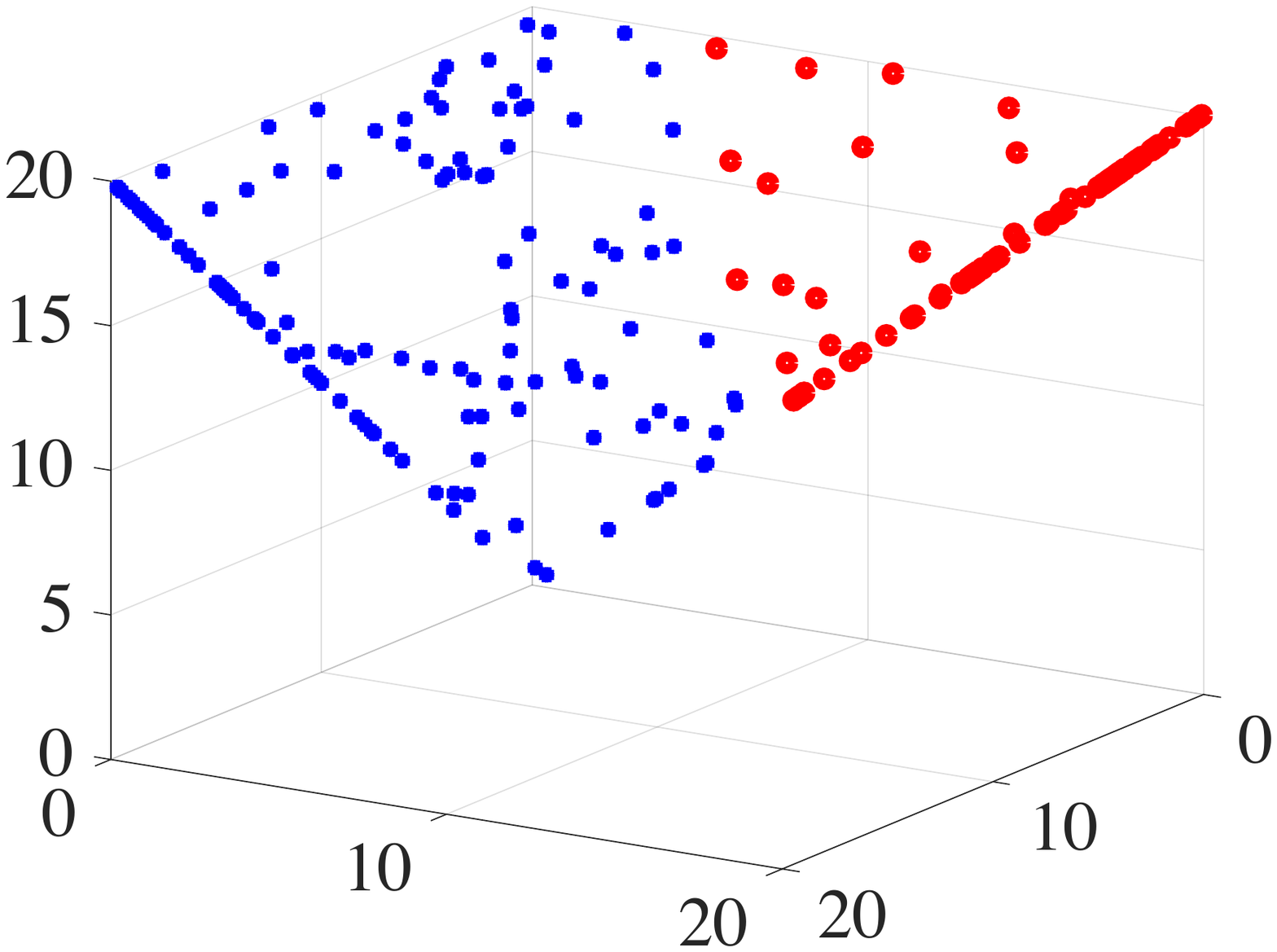}
		\end{minipage}
	}
	\subfigure{
		\begin{minipage}[b]{0.31\textwidth}
			\includegraphics[height=4.5cm,width=1\textwidth]{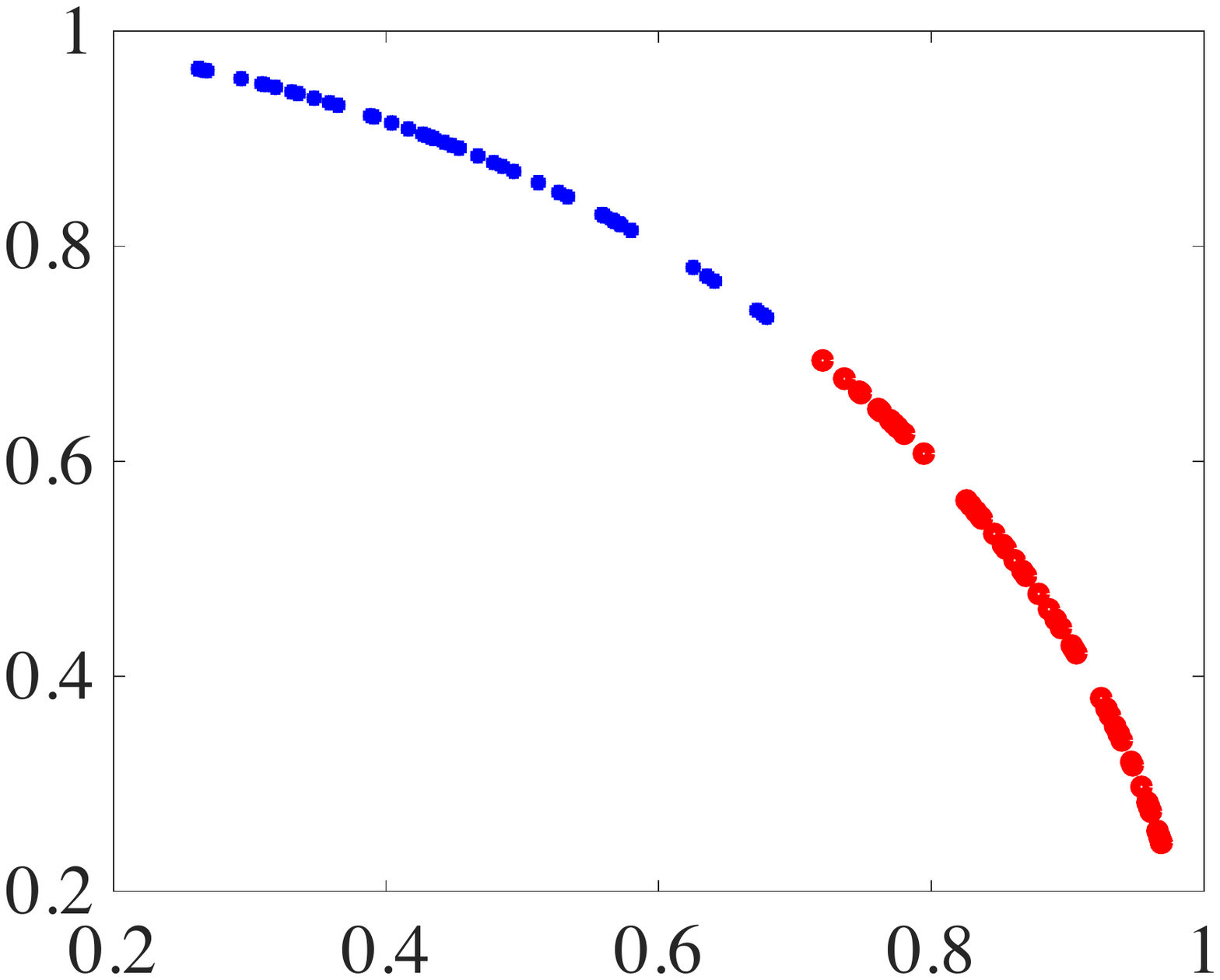}
		\end{minipage}
	}
	\caption{\textbf{Left}: two groups of data generated from two angles. \textbf{Center}: clustering result with distance-based method $K$-means. \textbf{Right}: projection into $\mV$ via factorization $\mX\approx\mU\mV$ where $\mU\in\setU_1, \mV\in\setV_2$. Blue and red represent different clusters~\cite{liu2019spherical}.}
	\label{fig:toy}
\end{figure*}
\subsection{Synthetic Data}
\begin{figure*}[h!]
	\centering
	\subfigure{
		\centering
		\begin{minipage}[b]{0.32\textwidth}
			\includegraphics[ width=\linewidth,height=4.5cm]{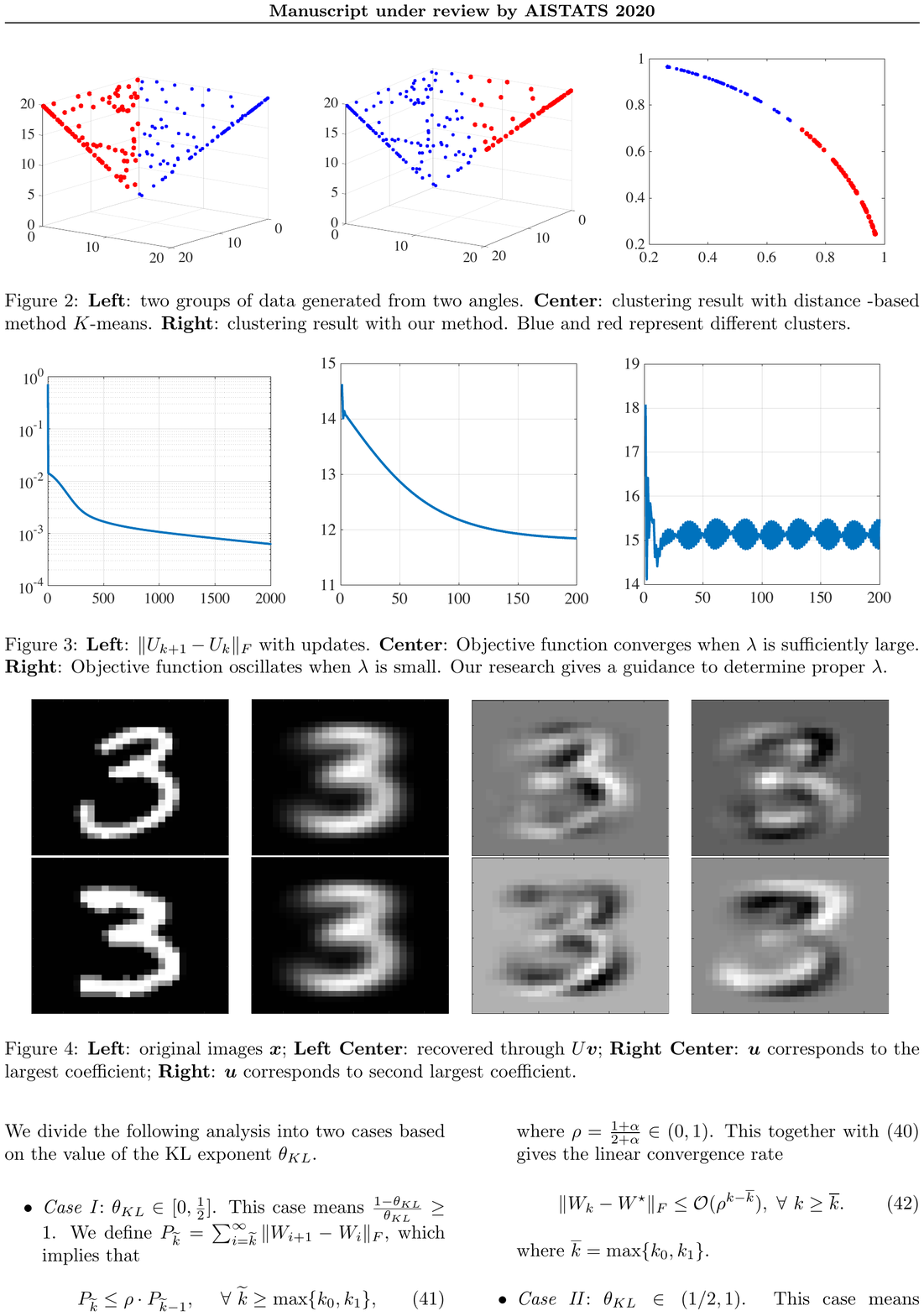}
		\end{minipage}
	}
	\subfigure{
		\centering
		\begin{minipage}[b]{0.31\textwidth}
			\includegraphics[width=\linewidth,height=4.5cm]{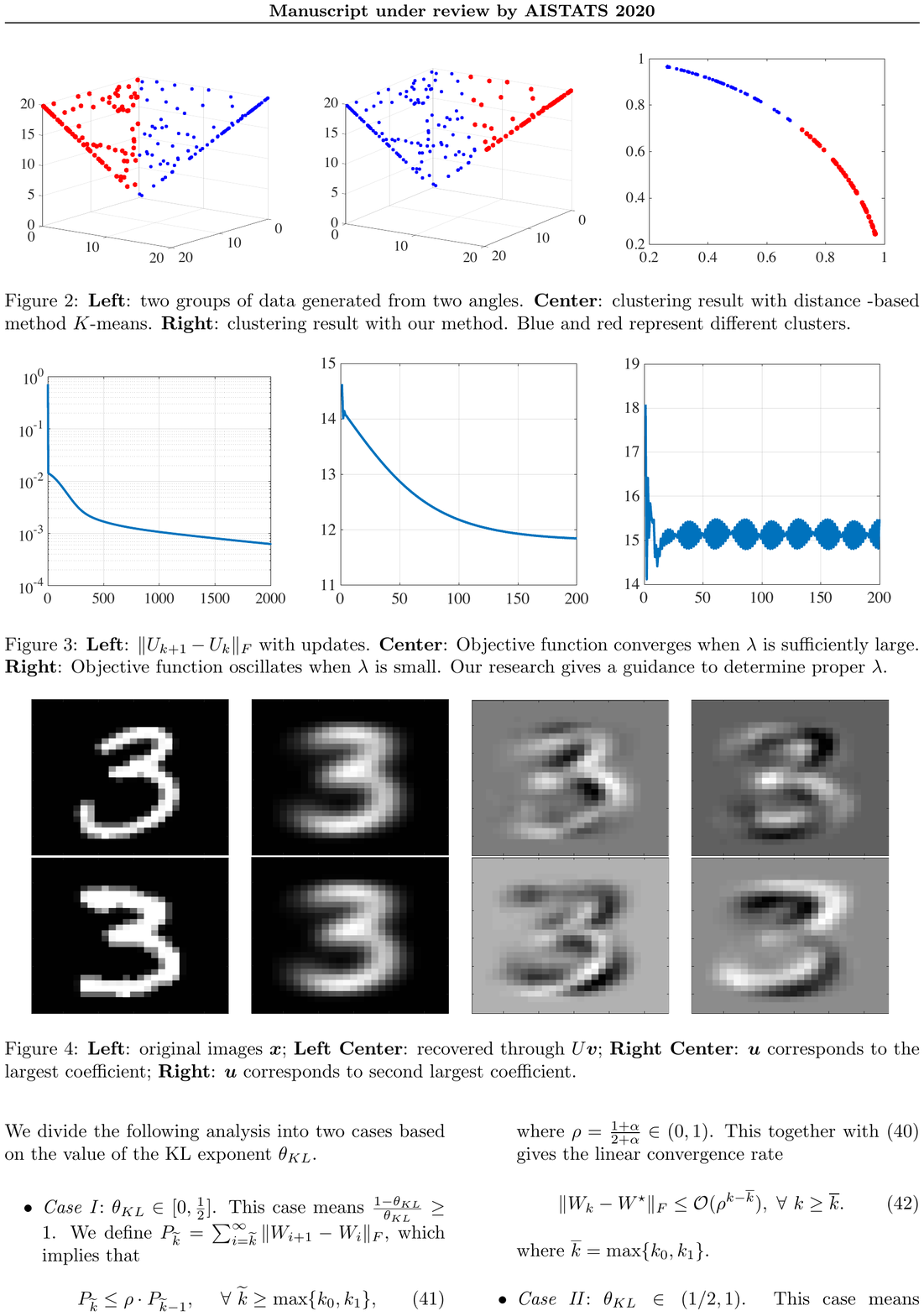}
		\end{minipage}
	}
	\subfigure{
		\centering
		\begin{minipage}[b]{0.31\textwidth}
			\includegraphics[width=\linewidth,height=4.5cm]{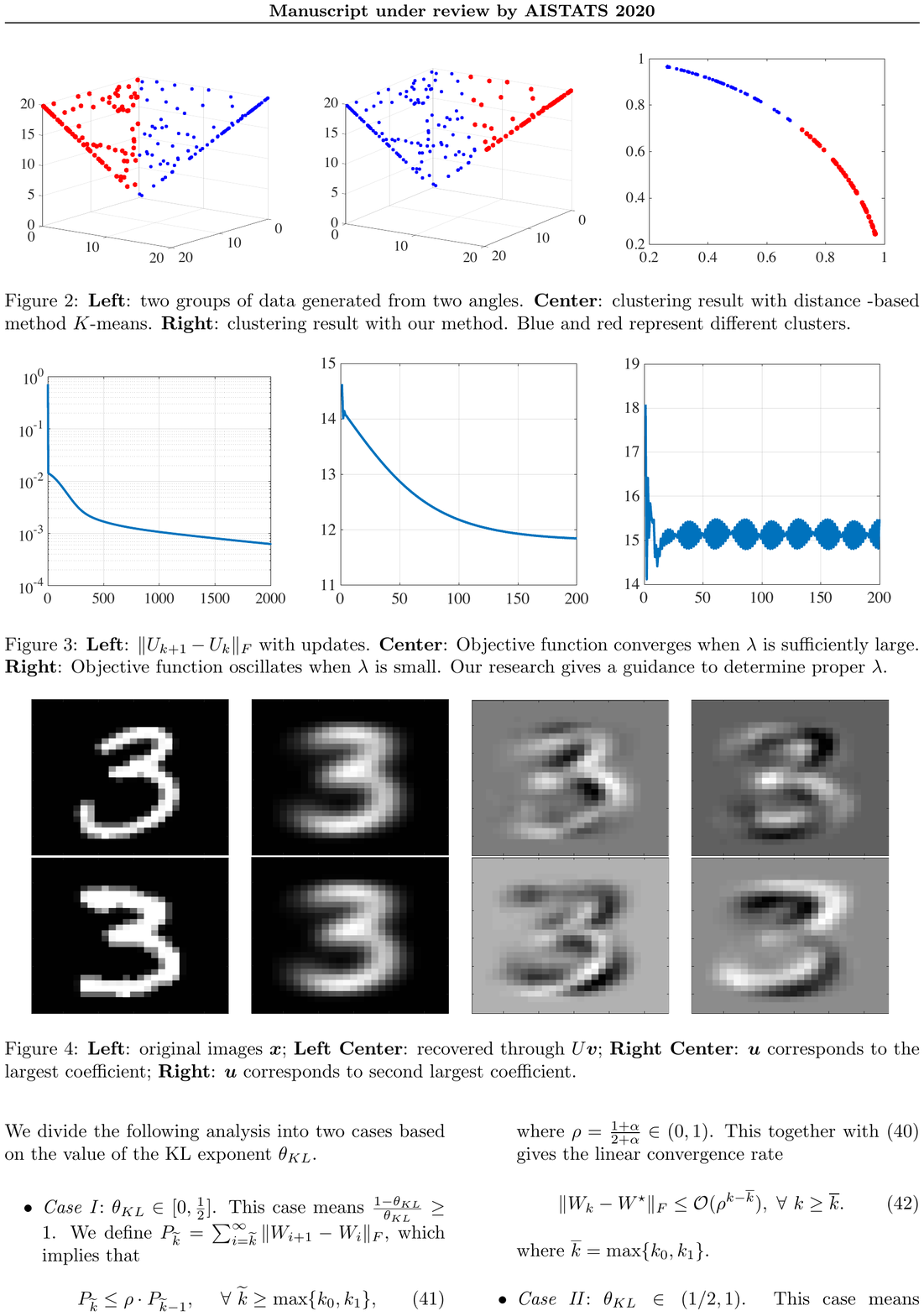}
		\end{minipage}
	}
	\caption{\textbf{Left}: Typical $\|\mU_{k+1}-\mU_{k}\|_F$ with updates. \textbf{Center}: Objective  converges when $\lambda$ is sufficiently large. \textbf{Right}: Objective oscillates when $\lambda$ is relatively small.}
	\label{fig:change}
\end{figure*}
We first generate 200 data points, half of which is distributed within the region between $\mX=\mZ$ and $\mZ$ axis (denoted as blue dots in the left part of Fig.~\ref{fig:toy}, while another group is generated within the region between $\mY=\mZ$ and $\mZ$ axis (denoted as the red dots). These two clusters of data are generated through different angles. Thus when we do clustering, it should be angle distance rather than Euclidean distance to determine the clustering result. For our method, we learn a feature matrix $\mU\in \R^{3\times 2}$ and plot the component matrix $\mV\in \R^{2\times 200}$ as the Right part illustrates. We see that, Euclidean distance-based method (such as $K$-means) will yield poor clustering result (middle part), while ours will obtain good clustering result (due to the data's generation method, they can be separated by anti-diagonal).


Also, we show the convergence of $\{\mW_k\}_{k\geq 0} =\{(\mU_k,\mV_k)\}_{k\geq 0}$ generated by our method. As Fig.~\ref{fig:change} shows, after short iterations, the generated sequences will be stable, which is in accordance with the convergence proof. Fig.~\ref{fig:change}  also illustrates that the objective with update. We also see that only sufficiently large $\lambda$, $\mu$ will make objective monotonically decreasing. 

\subsection{Real-world Datasets Experiment}
\begin{figure*}[h!]
	\centering
	\subfigure{
		\centering
		\begin{minipage}[b]{0.315\textwidth}
			\includegraphics[ width=\linewidth,height=7cm]{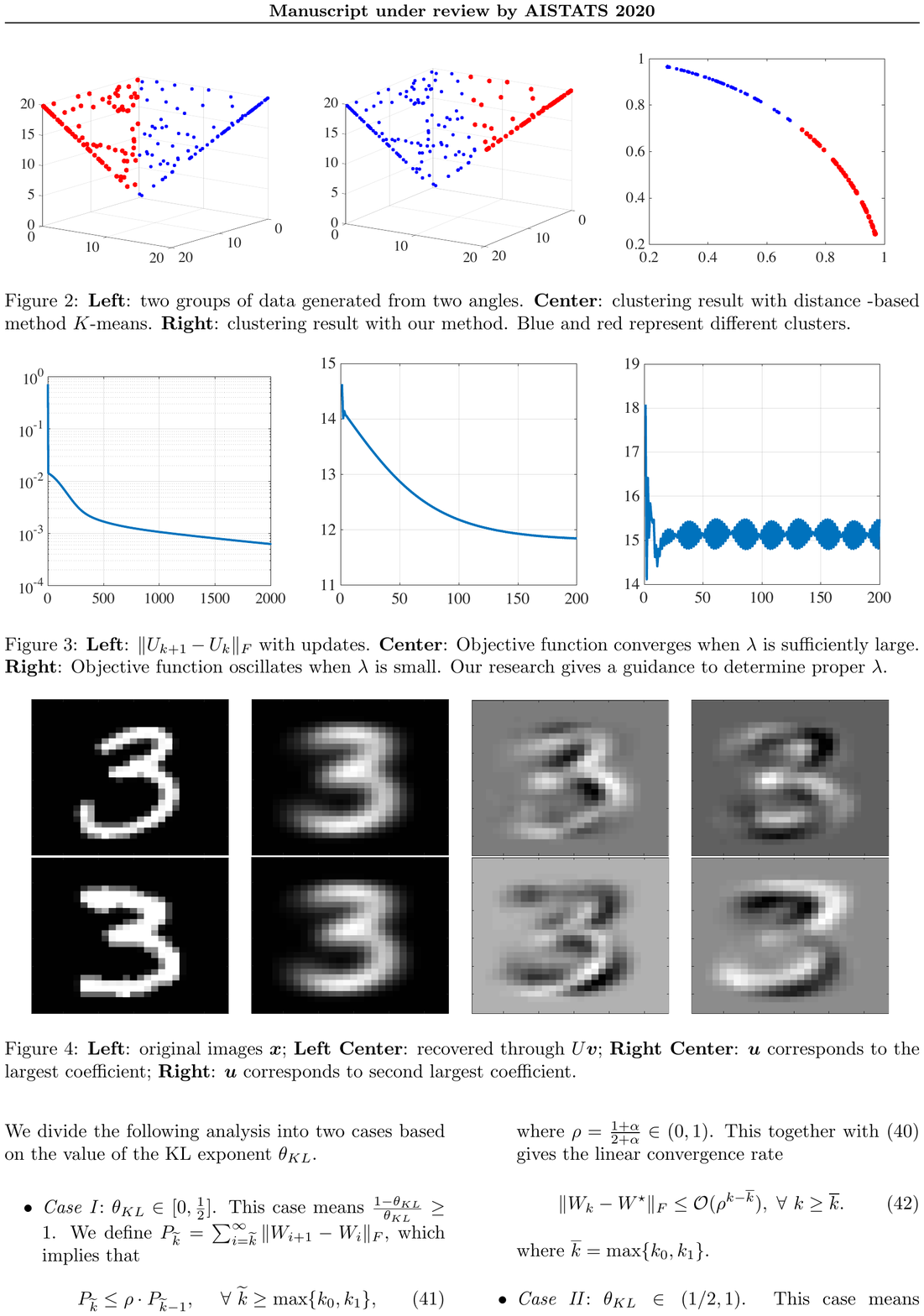}
		\end{minipage}
	}
	\subfigure{
		\centering
		\begin{minipage}[b]{0.315\textwidth}
			\includegraphics[width=\linewidth,height=7cm]{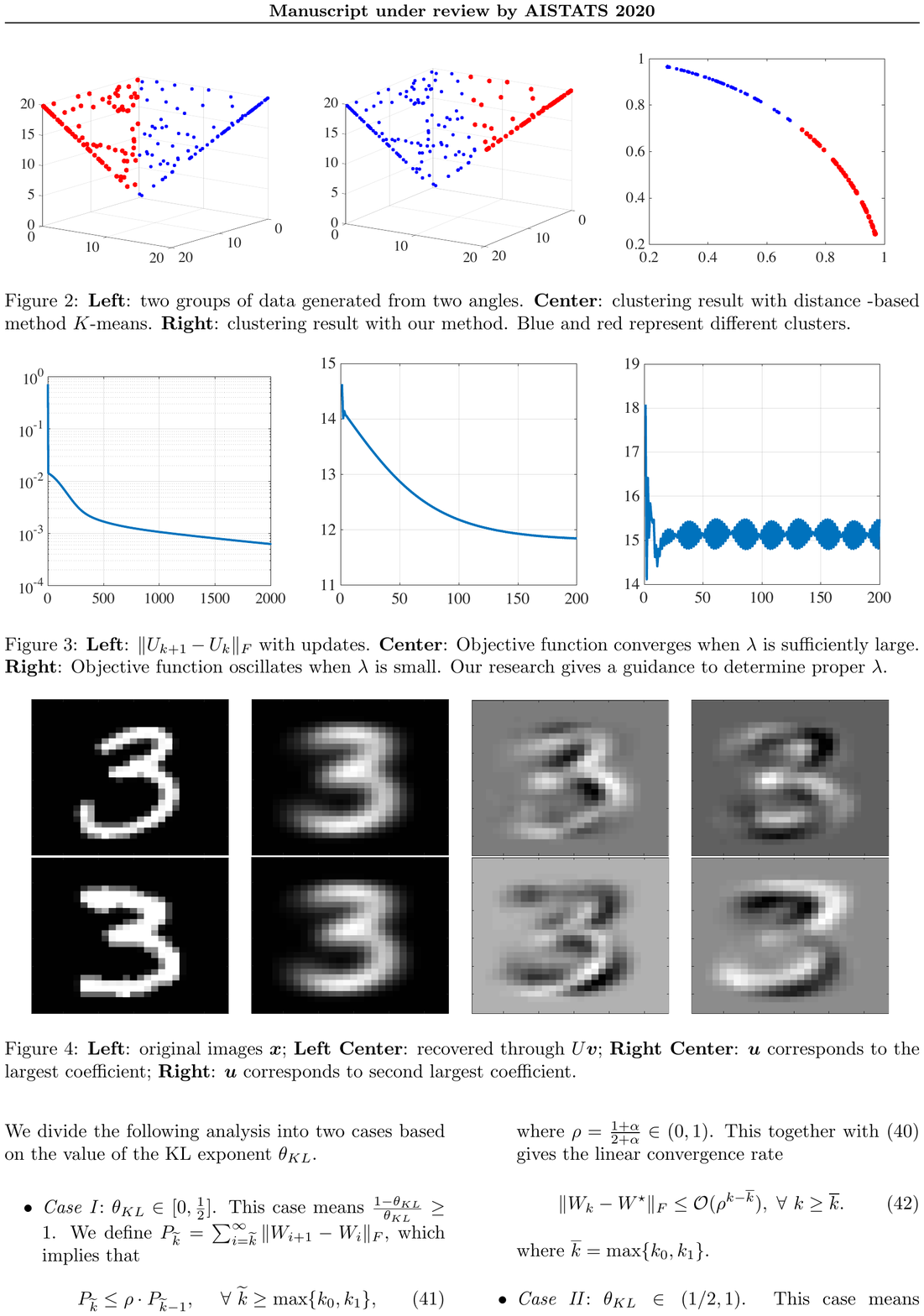}
		\end{minipage}
	}
	\subfigure{
		\centering
		\begin{minipage}[b]{0.32\textwidth}
			\includegraphics[width=\linewidth,height=7cm]{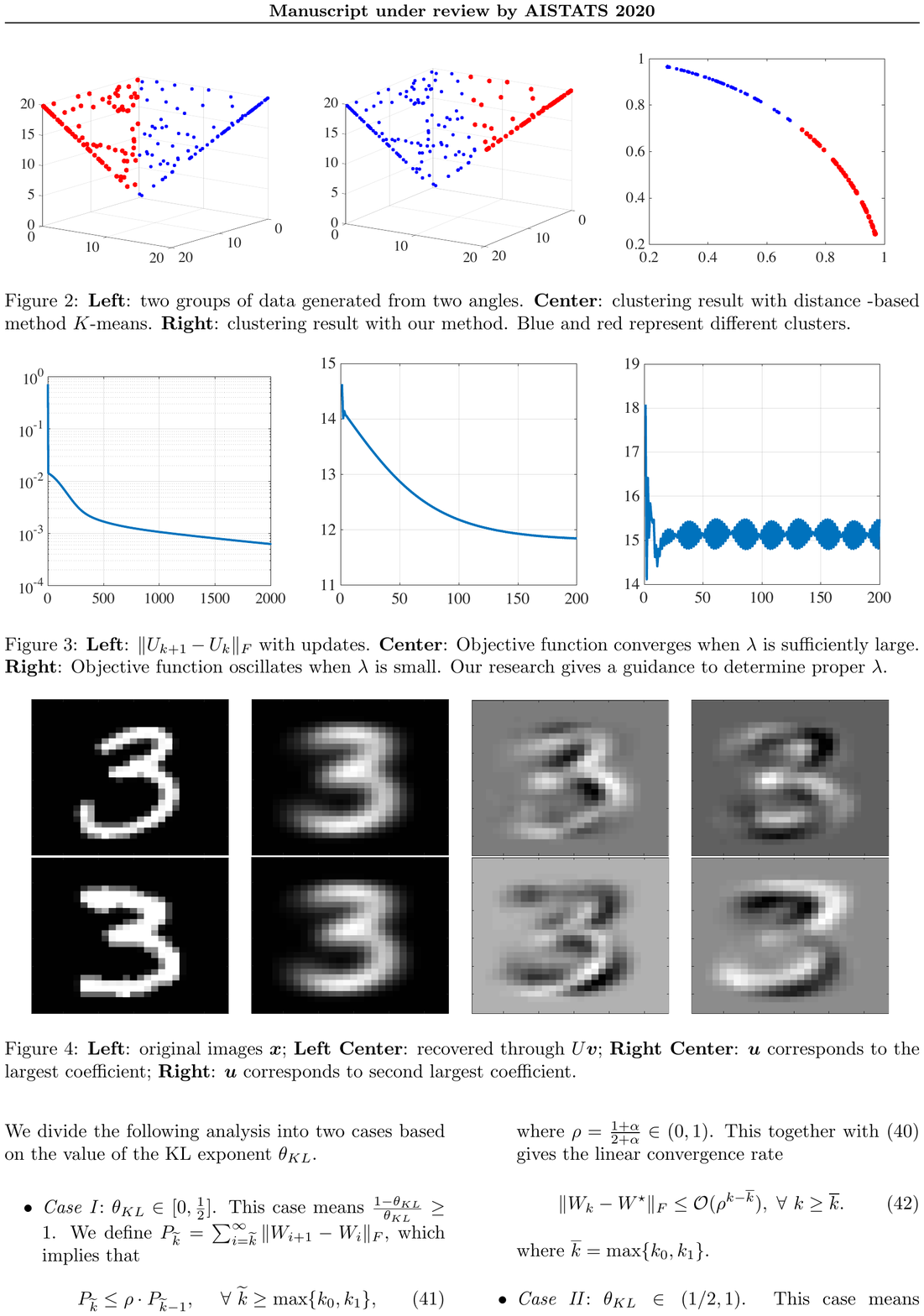}
		\end{minipage}
	}
	\caption{\textbf{Left}: original images $\vx$;  \textbf{Center}: $\vu$ corresponds to the largest coefficient; \textbf{Right}: $\vu$ corresponds to second largest coefficient. In this setting, $\mU\in\setU_1, \mV\in\setV_4$.}
	\label{fig:four}
\end{figure*}
\begin{figure}
	\centering
	\includegraphics[width=.45\linewidth]{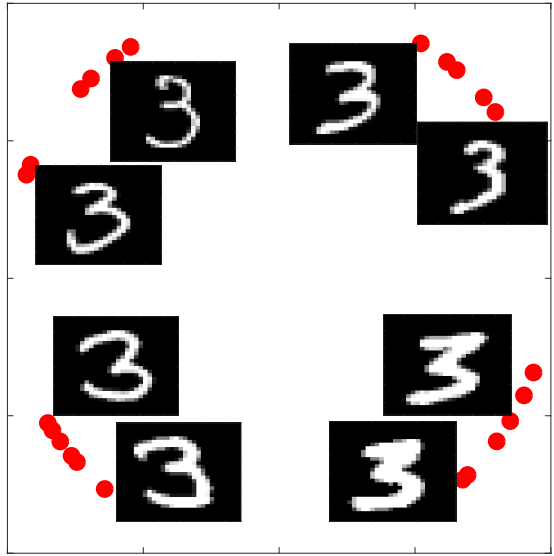}
	\caption{All data point is distributed in the sphere, we see the images with similar pattern lie close to each other. From left to right, the tail becomes shorter while from bottom to up the size turns smaller.}
	\label{fig:five}
\end{figure}
In this subsection, we will first test our proposed method on \textit{MNIST handwritten digits} dataset~\cite{lecun1998gradient}. We collect all images of digit three from the test set image file which give 1010 images. The original image is $28 \times 28$, which corresponds to $m = 784$ dimension feature. We set $r = 10$, that is to project into 10 dimensional space, then $\mU$ is made up with 10 mutually orthogonal vectors. Each image data in the new coordinate system is represented as $\vv$. To demonstrate the effectiveness in light of visualization, we set $s = 2$, that is every original data is approximated by the linear combination of top 2 features in $\vu$. Fig. \ref{fig:four} illustrates the most critical features. We see the first learned component has very similar patten as original image. We plot these data where $\vu_1$ and $\vu_2$ are the principal component vectors as Fig. \ref{fig:five} demonstrates, where $\mX$-axis and $\mY$-axis corresponds to the first and second element in $\vv$. The first and second row image in Fig. \ref{fig:four} lie in the second and third quadrant in Fig. \ref{fig:five}, where they share the similar first component but varies in second component. We see that our proposed method can yield reasonable and promising results even we project from original high dimension space into very low one.
\bibliography{main}
\bibliographystyle{ieeetr}
\end{document}